\newtheorem{definition}{Definition}[section]
\newtheorem{theorem}{Theorem}[section]
\newtheorem{lemma}[theorem]{Lemma}
\newcommand{\ml}[0]{\textsc{MovieLens}}
\newcommand{\bing}[0]{\textsc{BingAds}}
\newcommand{\mvm}[0]{\textsc{MVM}}
\newcommand{\fm}[0]{\textsc{FM}}
\newcommand{\bsfm}[0]{\textsc{mvFM}}
\newcommand{\threewayfm}[0]{\textsc{mvFM-3d}}
\newcommand{\partialmvm}[0]{\textsc{mvFM-reg}}
\newcommand{\tf}[0]{\textsc{TF}}
\newcommand{\lr}[0]{\textsc{LR}}
\long\def\comment#1{}
\begin{document}
\title{Multi-View Factorization Machines}
\numberofauthors{1}
\author{
Bokai~Cao$^1$\thanks{This work was done while the author was doing internship at Microsoft Research.}~,
Hucheng~Zhou$^2$,
Guoqiang~Li$^3$\thanks{This work was done before the author joins Huawei.}~~and
Philip~S.~Yu$^{1,4}$\\
\affaddr{$^1$Department of Computer Science, University of Illinois at Chicago, IL, USA}\\
\affaddr{$^2$Microsoft Research, Beijing, China}\\
\affaddr{$^3$Huawei Technologies, Shenzhen, China}\\
\affaddr{$^4$Institute for Data Science, Tsinghua University, Beijing, China}\\
\email{caobokai@uic.edu, huzho@microsoft.com, liguoqiang9@huawei.com, psyu@cs.uic.edu}
}

\maketitle

\begin{abstract}
With rapidly growing amount of data available on the web, it becomes increasingly likely to obtain data from different perspectives for multi-view learning. Some successive examples of web applications include recommendation and target advertising. Specifically, to predict whether a user will click an ad in a query context, there are available features extracted from user profile, ad information and query description, and each of them can only capture part of the task signals from a particular aspect/view. Different views provide complementary information to learn a practical model for these applications. Therefore, an effective integration of the multi-view information is critical to facilitate the learning performance.

In this paper, we propose a general predictor, named multi-view machines (MVMs), that can effectively explore the full-order interactions between features from multiple views. A joint factorization is applied for the interaction parameters which makes parameter estimation more accurate under sparsity and renders the model with the capacity to avoid overfitting. Moreover, MVMs can work in conjunction with different loss functions for a variety of machine learning tasks. The advantages of MVMs are illustrated through comparison with other methods for multi-view prediction, including support vector machines (SVMs), support tensor machines (STMs) and factorization machines (FMs).

A stochastic gradient descent method and a distributed implementation on Spark are presented to learn the MVM model. Through empirical studies on two real-world web application datasets, we demonstrate the effectiveness of MVMs on modeling feature interactions in multi-view data. A 3.51\% accuracy improvement is shown on MVMs over FMs for the problem of movie rating prediction, and 0.57\% for ad click prediction.
\end{abstract}

\begin{CCSXML}
<ccs2012>
<concept>
<concept_id>10002951.10003227.10003351</concept_id>
<concept_desc>Information systems~Data mining</concept_desc>
<concept_significance>500</concept_significance>
</concept>
<concept>
<concept_id>10010147.10010257</concept_id>
<concept_desc>Computing methodologies~Machine learning</concept_desc>
<concept_significance>500</concept_significance>
</concept>
<concept>
<concept_id>10010147.10010257.10010293.10010309</concept_id>
<concept_desc>Computing methodologies~Factorization methods</concept_desc>
<concept_significance>500</concept_significance>
</concept>
</ccs2012>
\end{CCSXML}

\ccsdesc[500]{Information systems~Data mining}
\ccsdesc[500]{Computing methodologies~Machine learning}
\ccsdesc[500]{Computing methodologies~Factorization methods}
\printccsdesc
\keywords{multi-view learning, feature interaction, factorization}

\section{Introduction}\label{sec:intro}

Web data is available not only in great volume but also in multiple representations/views from a variety of sources or feature subsets. Generally, different views provide complementary information to learn an effective model for web-scale applications. Thus, multi-view learning can facilitate the learning performance and is prevalent in a wide range of application domains. For example, for the business on the web, it is critical to estimate the probability that the display of an ad to a specific user when s/he searches for a query will lead to a click. This process involves three entities: users, ads, and queries. An effective integration of features describing these different entities is directly related to precise targeting of an advertising system.

One of the key challenges of multi-view learning is to model the interactions/correlations between different views, wherein complementary information is contained. Conventionally, multi-kernel learning algorithms combine kernels associated with respective views to improve the learning performance \cite{lanckriet2004learning}. Basically, coefficients are learned based on the usefulness/informativeness of the corresponding views, and thus inter-view correlations are only considered at the view-level. These approaches, however, fail to explore the explicit correlations between features across multiple views.

In contrast to modeling on views, another direction for modeling multi-view data is to directly consider the abundant correlations between features from different views. Feature interactions with different orders can reflect different but complementary insights. Assume that we have obtained a latent factor representing wealth/price-related attributes for each entity ({\em i.e.}, users, ads and queries) in the advertising system of a search engine, as illustrated in Table~\ref{tab:example}. For example, users who have high purchase power ({\em i.e.}, a positive latent factor, $a_{user}>0$) may have interests in luxury products ($a_{ad}>0$). However, a thoughtful recommender system should not always recommend luxury products to these users regardless of the query context. In Table~\ref{tab:example}, it is unreasonable to recommend a luxury bag ($a_{ad}>0$) to the user when s/he searches for a disease ($a_{query}<0$), in which case some relevant medicines or medical books ($a_{ad}<0$) would seem better choices. We can observe that, in such scenarios, only the third-order interactions contribute positively to the recommendation of medicines and negatively to that of luxury bags, while the first-order and the second-order interactions insist to recommend something inappropriate in the specific context. Note here that we do not claim the higher order interactions can work as the best indicator by their own in all problems. Nevertheless, integrating their contributions into the decision function in an efficient manner is critical.

\begin{table}
\centering
\caption{An example showing the discrepancy between feature interactions with different orders. $\#1=user+ad+query$, $\#2=user\times ad+user\times query+ad\times query$, $\#3=user\times ad\times query$.}
\label{tab:example}
\begin{tabular}{rrr|rrr}
\toprule
User & Ad & Query & \#1 & \#2 & \#3 \\
\midrule
1.20 \color{green}{\tiny(+)} & 1.80 \color{green}{\tiny(+)} & 0.50 \color{green}{\tiny(+)}
& 3.50 \color{green}{\tiny(+)} & 3.66 \color{green}{\tiny(+)} & 1.08 \color{green}{\tiny(+)} \\
1.20 \color{green}{\tiny(+)} & 1.80 \color{green}{\tiny(+)} & -0.50 \color{red}{\tiny(-)}
& 2.50 \color{green}{\tiny(+)} & 0.66 \color{green}{\tiny(+)} & -1.08 \color{red}{\tiny(-)} \\
1.20 \color{green}{\tiny(+)} & -1.80 \color{red}{\tiny(-)} & -0.50 \color{red}{\tiny(-)}
& -1.10 \color{red}{\tiny(-)} & -1.86 \color{red}{\tiny(-)} & 1.08 \color{green}{\tiny(+)} \\
\bottomrule
\end{tabular}
\end{table}

S. Rendle pioneers the concept of factorization machines (FMs) \cite{rendle2010factorization} which are now the state-of-the-art approach to model feature interactions and inspire this work. However, the practical implementations of FMs are usually limited to the second-order interactions, {\em i.e.}, pairwise correlations. This is partially due to the fact that a separate set of latent factors (parameters to be learned) is introduced for each order of interactions in FMs. That is to say, a feature has a latent representation when it is considered for the second-order interactions, while the same feature has a different representation for the third-order interactions. Moreover, the global bias and the first-order interaction terms in FMs are not factorized and independent from the latent factors for higher order interactions. These bias terms and latent factors for different orders altogether compose inconsistent representations of input features and thus compromise the model interpretability. In addition, independent factorization of interactions with different orders results in a large set of model parameters to be learned which makes the training process challenging.

The major challenge of including higher order interactions is that observations with such interactions become sparser with higher orders. Therefore, parameters representing higher order interactions can hardly be learned from their limited observations, especially from the extremely sparse data, {\em e.g.}, recommender systems. We suggest a common latent subspace for all features that is shared by different orders of interactions. In this manner, the full-order interactions observed in the data can collectively be used to learn a consistent representation in the latent feature space.

In this paper, we propose a novel model for multi-view prediction, called multi-view machines (MVMs). The main advantages of MVMs are outlined as follows:
\begin{itemize}
\item MVMs include the global bias and the full-order interactions between features from multiple views, ranging from the first-order interactions ({\em i.e.}, contributions of single features) to the highest-order interactions ({\em i.e.}, contributions of combinations of features from each view).
\item MVMs jointly factorize the interaction parameters for different orders to allow accurate parameter estimation under sparsity and avoid overfitting via the effect of bias factors.
\item MVMs are a general predictor that can work with different loss functions ({\em e.g.}, square error, hinge loss, logit loss) for a variety of machine learning tasks.
\end{itemize}

To empirically analyze and understand these advantages, we have the MVM model and other baselines implemented in a distributed environment, GraphX \cite{graphx}, which is a component of Spark \cite{spark}. Extensive experiments are conducted on real-world web application datasets, for regression and classification tasks, respectively. A 3.51\% accuracy improvement is shown on MVMs over FMs for the problem of movie rating prediction, and 0.57\% for ad click prediction.

\section{Background}\label{sec:prelim}

In this section, we first state the problem of multi-view prediction and briefly review the adaptation of existing methods for multi-view prediction, including support vector machines (SVMs), support tensor machines (STMs) and factorization machines (FMs).

\subsection{Multi-view Prediction}\label{sec:problem}

Suppose each instance has representations in $m$ different views, {\em i.e.}, $\mathbf{x}^T = \left( {\mathbf{x}^{(1)}}^T, ..., {\mathbf{x}^{(m)}}^T \right)$, where $\mathbf{x}^{(p)} \in \mathbb{R}^{I_p}$, $I_p$ is the dimensionality of the $p$-th view. Let $d=\sum_{p=1}^{m} I_p$, so $\mathbf{x} \in \mathbb{R}^d$. Considering the problem of click through rate (CTR) prediction for advertising display, for example, an instance corresponds to an {\em impression} which involves a user, an ad, and a query. Therefore, suppose $\mathbf{x}^T = \left( {\mathbf{x}^{(1)}}^T, {\mathbf{x}^{(2)}}^T, {\mathbf{x}^{(3)}}^T \right)$ is an impression, $\mathbf{x}^{(1)}$ contains information of the user profile, $\mathbf{x}^{(2)}$ is associated with the ad information, and $\mathbf{x}^{(3)}$ is the description from the query aspect. The result of an impression is {\em click} or {\em non-click}. 

Given a training set with $n$ labeled instances represented from $m$ views: $\mathcal{D} = \left\{ \left( \mathbf{x}_i, y_i \right)|~i = 1,...,n \right\}$, in which $\mathbf{x}_i^T = \left( {\mathbf{x}_i^{(1)}}^T, ..., {\mathbf{x}_i^{(m)}}^T \right)$ and $y_{i}\in\{-1,1\}$ is the class label of the $i$-th instance. For CTR prediction problem, $y=1$ denotes {\em click} and $y=-1$ denotes {\em non-click} in an impression. The task of multi-view classification is to learn a function $f: \mathbb{R}^{I_1} \times \cdots \times \mathbb{R}^{I_m} \rightarrow \{-1,1\}$ that correctly predicts the label of a test instance. Alternatively, if $y_{i}\in\mathbb{R}$, it is a multi-view regression problem, {\em e.g.}, rating prediction.

\begin{table}
\small
\caption{Symbols.}
\label{tab:notation}
\begin{tabular}{ll}
\toprule
Symbol & Definition and description\\
\midrule
$s$ & each lowercase letter represents a scale\\
$\mathbf{v}$ & each boldface lowercase letter represents a vector\\
$\mathbf{M}$ & each boldface capital letter represents a matrix\\
$\mathcal{T}$ & each calligraphic letter represents a tensor, set or space\\
$\left\langle\cdot,\cdot\right\rangle$ & denotes inner product\\
$\circ$ & denotes tensor product or outer product\\
$\times_k$ & denotes mode-$k$ product\\
$\left|\cdot\right|$ & denotes absolute value\\
$\left\|\cdot\right\|_{F}$ & denotes (Frobenius) norm of vector, matrix or tensor\\
\bottomrule
\end{tabular}
\end{table}

Table~\ref{tab:notation} lists some basic symbols that will be used throughout the paper. In addition, we introduce the concept of tensors which are higher order arrays that generalize the notions of vectors (the first-order tensors) and matrices (the second-order tensors), whose elements are indexed by more than two indexes. Definitions of tensor product and mode-$k$ product are given which will be used to formulate our proposed model.

\begin{definition}[Tensor Product or Outer Product]
The tensor product $\mathcal{X}\circ\mathcal{Y}$ of a tensor $\mathcal{X} \in \mathbb{R}^{I_1\times\cdots\times I_m}$ and another tensor $\mathcal{Y} \in \mathbb{R}^{I'_1\times\cdots\times I'_{m'}}$ is defined by
\begin{align}
(\mathcal{X}\circ\mathcal{Y})_{i_1,...,i_m,i'_1,...,i'_{m'}}=x_{i_1,...,i_m}y_{i'_1,...,i'_{m'}}
\end{align}
for all index values.
\end{definition}

\begin{definition}[Mode-$k$ Product]
The mode-$k$ product $\mathcal{X}\times_k\mathbf{M}$ of a tensor $\mathcal{X} \in \mathbb{R}^{I_1\times\cdots\times I_m}$ and a matrix $\mathbf{M} \in \mathbb{R}^{I'_k\times I_k}$ is defined by
\begin{align}
(\mathcal{X}\times_k\mathbf{M})_{i_1,...,i_{k-1},j,i_{k+1},...,i_m}=\sum_{i_k=1}^{I_k}x_{i_1,...,i_m}m_{j,i_k}
\end{align}
for all index values.
\end{definition}

Basically, through the mode-k matrix product, a new tensor of the same order is obtained by applying the matrix to each mode-k fiber of the tensor.

\subsection{SVM Model}\label{sec:svm}

Vapnik introduced support vector machines (SVMs) \cite{vapnik2000nature} based on the maximum-margin principle. Essentially, SVMs integrate the hinge loss and the L2-norm regularization. The decision function with a linear kernel is as follows\footnote{The sign function is omitted, because the analysis and conclusions can easily extend to other generalized linear models, {\em e.g.}, logistic regression.}:
\begin{align}
\hat{y}=w_0+\sum_{i=1}^d w_i x_i=w_0+\sum_{p=1}^m\sum_{i_p=1}^{I_p} w_{i_p}^{(p)} x_{i_p}^{(p)}\label{eq:svm}
\end{align}
where $\mathbf{x}$ is simply a concatenation of features from different views in the multi-view setting, {\em i.e.}, $\mathbf{x}^T = \left( {\mathbf{x}^{(1)}}^T, ..., {\mathbf{x}^{(m)}}^T \right)$, as shown in Figure~\ref{fig:related}.


Obviously, no interactions between views are explored in Eq.~(\ref{eq:svm}). Through the employment of a nonlinear kernel, SVMs can implicitly project data from the feature space into a more complex high-dimensional space, which allows SVMs to model higher order interactions between features. However, as discussed in \cite{rendle2010factorization}, all interaction parameters of nonlinear SVMs are completely independent.

For nonlinear SVMs, there must be enough instances $\mathbf{x}\in\mathcal{D}$ where $x_{i_p}^{(p)}\neq0$ and $x_{i_q}^{(q)}\neq0$ to reliably estimate the second-order interaction parameter $w_{i_p,i_q}^{(p,q)}$. The instances with either $x_{i_p}^{(p)}=0$ or $x_{i_q}^{(q)}=0$ cannot be used for estimating $w_{i_p,i_q}^{(p,q)}$. That is to say, on a sparse dataset where there are too few or even no cases for some higher order interactions, nonlinear SVMs are likely to degenerate into linear SVMs. Therefore, factorizing and projecting higher order interactions into a consistent latent space would facilitate parameter estimation under sparsity.

\begin{figure}
\centering
    \begin{minipage}[l]{\columnwidth}
      \centering
      \includegraphics[width=1\textwidth]{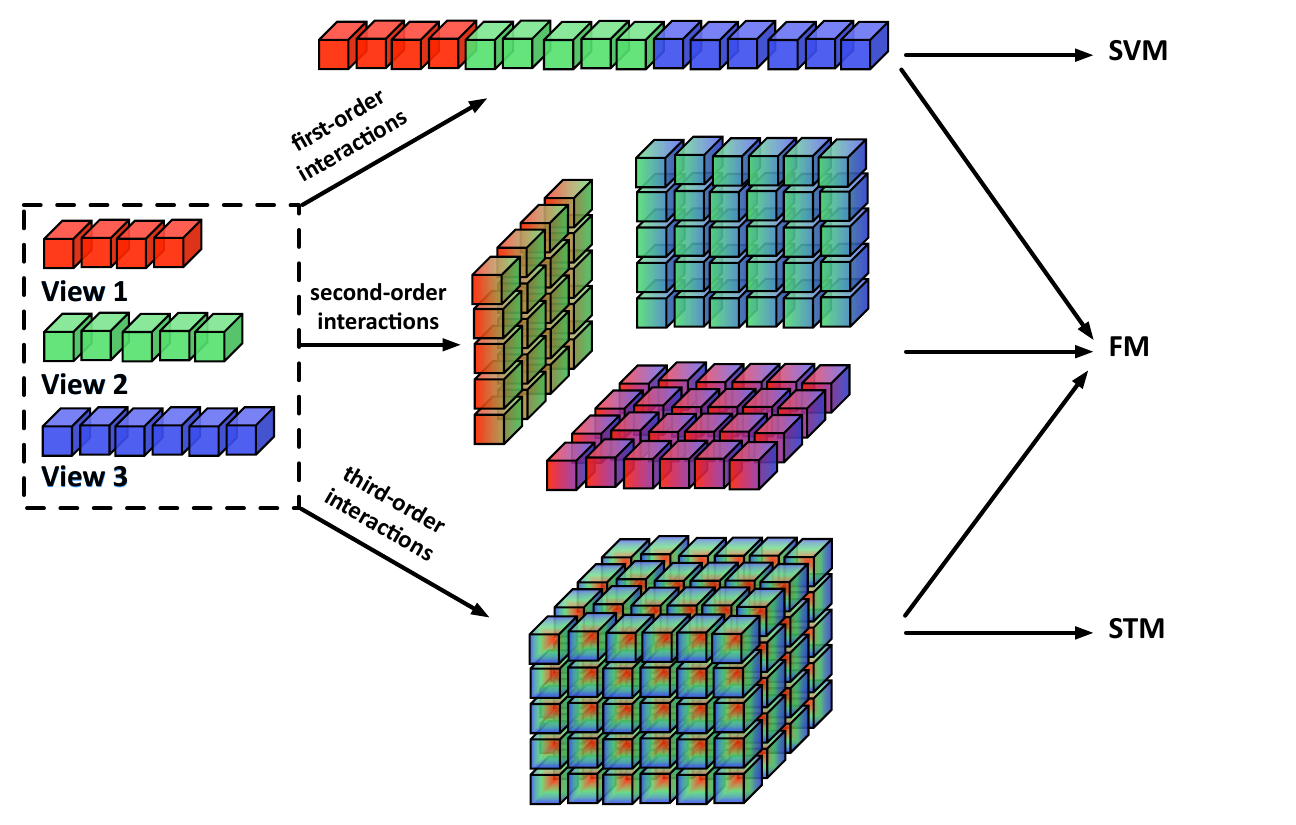}
    \end{minipage}
  \caption{Related work (and variations) on modeling feature interactions in multi-view data.}\label{fig:related}
\end{figure}

\subsection{STM Model}\label{sec:stm}

Cao et al.~investigated multi-view classification by modeling features interactions across views as a tensor, {\em i.e.}, $\mathcal{X}=\mathbf{x}^{(1)}\circ\cdots\circ\mathbf{x}^{(m)} \in \mathbb{R}^{I_1\times\cdots\times I_m}$ \cite{cao2014tensor} and solved the problem in the framework of support tensor machines (STMs) \cite{tao2005supervised}. Basically, as shown in Figure~\ref{fig:related}, only the highest-order interactions are explored:
\begin{align}
\hat{y}=\sum_{i_1=1}^{I_1}\cdots\sum_{i_m=1}^{I_m} w_{i_1,...,i_m} \left(\prod_{p=1}^m x_{i_p}^{(p)}\right)\label{eq:cao}
\end{align}
where $w_{i_1,...,i_m}=\prod_{p=1}^m w_{i_p}^{(p)}$, {\em i.e.}, a rank-one decomposition of the weight tensor $\mathcal{W} \in \mathbb{R}^{I_1\times\cdots\times I_m}$ \cite{cao2014tensor}.

However, estimating a lower order interaction ({\em e.g.}, a pairwise one) reliably is easier than estimating a higher order one, and lower order interactions can usually explain the data sufficiently \cite{rendle2010pairwise,cai2011low}. Thus, it motivates us to include the lower order interactions into the model. Moreover, instead of a rank-one decomposition, it is desirable to apply a higher rank decomposition of the weight tensor to capture more latent factors and thereby achieving a better approximation to the original interaction parameters.

\subsection{FM Model}\label{sec:fm}

Rendle introduced factorization machines (FMs) \cite{rendle2010factorization} that combine the advantages of SVMs with factorization models. The model equation of a 2-way FM is as follows:
\begin{align}
\hat{y}=w_0+\sum_{i=1}^d w_i x_i+\sum_{i=1}^d\sum_{j=i+1}^d \left\langle \mathbf{v}_i, \mathbf{v}_j \right\rangle x_i x_j \label{eq:fm}
\end{align}
where $d=\sum_{p=1}^{m} I_p$ and $\left\langle \mathbf{v}_i, \mathbf{v}_j \right\rangle=\sum_{f=1}^k v_{i,f} v_{j,f}$.

Note that pairwise interactions between all features are included in FMs without consideration of the view segmentation. In the multi-view setting, there can be redundant correlations between features within the same view, {\em i.e.}, intra-view correlations, which are thereby unworthy of consideration. Field-aware FMs \cite{libffm} integrate the field/view concept into the FM model where the extension is limited to the second-order feature interactions. The coupled group lasso model \cite{yan2014coupled} is essentially an application of the 2-way FMs in multi-view classification. Let {\bsfm} denote the multi-view variation of FMs with a decision function as follows:
\begin{align}
\hat{y}
& = w_0+\sum_{p=1}^m\sum_{i_p=1}^{I_p} w_{i_p}^{(p)} x_{i_p}^{(p)} + \sum_{i_1=1}^{I_1}\sum_{i_2=1}^{I_2} \left\langle \mathbf{v}_{i_1}^{(1)}, \mathbf{v}_{i_2}^{(2)} \right\rangle x_{i_1}^{(1)} x_{i_2}^{(2)} \nonumber \\
& + \cdots + \sum_{i_{m-1}=1}^{I_{m-1}}\sum_{i_m=1}^{I_m} \left\langle \mathbf{v}_{i_{m-1}}^{(m-1)}, \mathbf{v}_{i_m}^{(m)} \right\rangle x_{i_{m-1}}^{(m-1)} x_{i_m}^{(m)} \label{eq:fm2}
\end{align}

The pairwise interaction parameter $w_{i_p,i_q}^{(p,q)}=\left\langle \mathbf{v}_{i_p}^{(p)}, \mathbf{v}_{i_q}^{(q)} \right\rangle$ in Eq.~(\ref{eq:fm2}) indicates that $w_{i_p,i_q}^{(p,q)}$ can be learned from instances with $x_{i_p}^{(p)}\neq0$ and some $x_{i_{q'}}^{(q')}\neq0$ (sharing $\mathbf{v}_p$), or $x_{i_q}^{(q)}\neq0$ and some $x_{i_{p'}}^{(p')}\neq0$ (sharing $\mathbf{v}_q$). It makes {\bsfm} (and FMs) more effective under sparsity than SVMs where only instances with $x_{i_p}^{(p)}\neq0$ and $x_{i_q}^{(q)}\neq0$ can be used to learn the second-order feature interaction $w_{i_p,i_q}^{(p,q)}$.

However, the interaction parameters for different orders are completely independent in {\bsfm} (and FMs), {\em e.g.}, the first-order interaction parameter, $w_{i_p}^{(p)}$, and the second-order interaction parameter, $\mathbf{v}_{i_p}^{(p)}$, in Eq.~(\ref{eq:fm2}). Furthermore, as illustrated in Figure~\ref{fig:related}, additional sets of model parameters will be introduced when we consider higher order feature interactions in {\bsfm} (and FMs) which makes the learning process harder. A more effective strategy is needed when including the higher order interactions.

\section{Multi-view Machine Model}\label{sec:method}

\subsection{Model Formulation}

The key challenge of multi-view prediction is to model the interactions between features from different views, wherein complementary information is contained. Here, we consider nesting all interactions up to the $m$th-order between $m$ views:

{
\vspace{-5pt}
\scriptsize
\begin{align}
\hat{y}
& = \underbrace{\beta_0}_\text{global bias} + \underbrace{\sum_{p=1}^m\sum_{i_p=1}^{I_p} \beta_{i_p}^{(p)} x_{i_p}^{(p)}}_\text{first-order interactions} \nonumber \\
& + \underbrace{\sum_{i_1=1}^{I_1}\sum_{i_2=1}^{I_2} \beta_{i_1,i_2}^{(1,2)} x_{i_i}^{(1)} x_{i_2}^{(2)} + \cdots + \sum_{i_{m-1}=1}^{I_{m-1}}\sum_{i_m=1}^{I_m} \beta_{i_{m-1},i_m}^{(m-1,m)} x_{i_{m-1}}^{(m-1)} x_{i_m}^{(m)}}_\text{second-order interactions} \nonumber \\
& + \cdots + \underbrace{\sum_{i_1=1}^{I_1}\cdots\sum_{i_m=1}^{I_m} \beta_{i_1,...,i_m} \left(\prod_{p=1}^m x_{i_p}^{(p)}\right)}_\text{$m$th-order interactions} \label{eq:nest}
\end{align}
}

Let us add an extra feature with constant value $1$ to the feature vector $\mathbf{x}^{(p)}$, {\em i.e.}, ${\mathbf{z}^{(p)}}^T = ({\mathbf{x}^{(p)}}^T,1) \in \mathbb{R}^{I_{p}+1}, \forall p=1,...,m$. Then, Eq.~(\ref{eq:nest}) can be compactly rewritten as:
\begin{align}
\hat{y}=\sum_{i_1=1}^{I_1+1}\cdots\sum_{i_m=1}^{I_m+1} w_{i_1,...,i_m} \left(\prod_{p=1}^m z_{i_p}^{(p)}\right)\label{eq:nest2}
\end{align}
where $w_{I_1+1,...,I_m+1} = \beta_0$ and $w_{i_1,...,i_m} = \beta_{i_1,...,i_m}, \forall i_p \le I_p$. For $w_{i_1,...,i_m}$ with some indexes satisfying $i_p=I_p+1$, it encodes lower order interaction between views whose $i_p \le I_p$. Hereinafter, let $w_{i_p}^{(p)}$ denote $w_{i_1,...,i_m}$ where only $i_p \le I_p$ and $i_q=I_q+1, q\neq p$, and let $w_{i_p,i_q}^{(p,q)}$ denote $w_{i_1,...,i_m}$ where $i_p \le I_p$, $i_q \le I_q$ and $i_r=I_r+1, r \notin \{p,q\}$, {\em etc.}

\begin{figure}
\centering
    \begin{minipage}[l]{\columnwidth}
      \centering
      \includegraphics[width=1\textwidth]{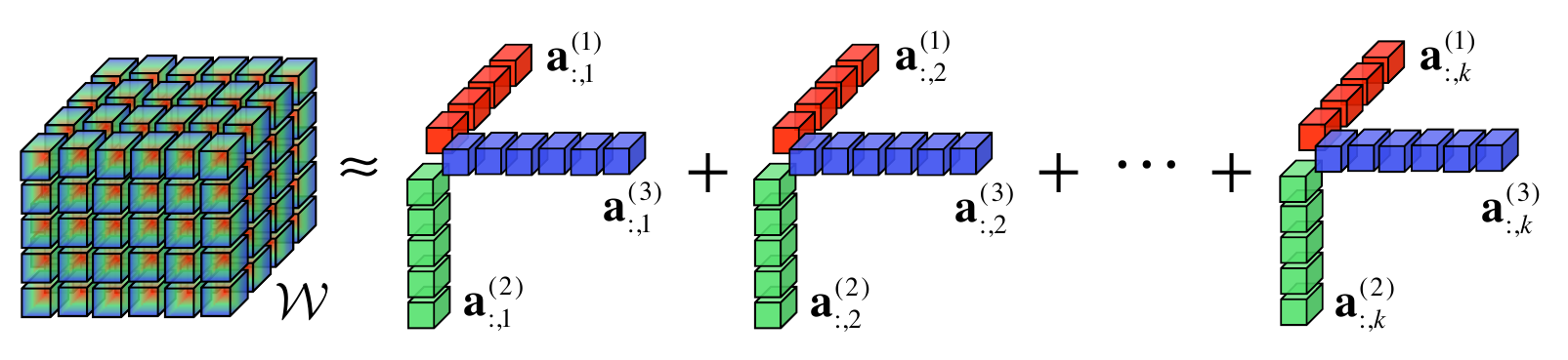}
    \end{minipage}
  \caption{CP factorization. The third-order ($m=3$) tensor $\mathcal{W}$ is approximated by $k$ rank-one tensors. The $f$-th factor tensor is the tensor product of three vectors, {\em i.e.}, $\mathbf{a}^{(1)}_{:,f}\circ\mathbf{a}^{(2)}_{:,f}\circ\mathbf{a}^{(3)}_{:,f}$.}\label{fig:cp}
\end{figure}

The number of parameters in Eq.~(\ref{eq:nest2}) is $\prod_{p=1}^m (I_p+1)$, which can make the model prone to overfitting and ineffective on sparse data. Therefore, we assume that the effect of interactions has a low rank and the $m$th-order weight tensor $\mathcal{W}=\{w_{i_1,...,i_m}\} \in \mathbb{R}^{(I_1+1)\times\cdots\times(I_m+1)}$ can be factorized into $k$ factors:
\begin{align}
\mathcal{W}=\mathbf{C}\times_1\mathbf{A}^{(1)}\times_2\cdots\times_m\mathbf{A}^{(m)} \label{eq:cp}
\end{align}
where $\mathbf{A}^{(p)} \in \mathbb{R}^{(I_p+1) \times k}$, and $\mathbf{C} \in \mathbb{R}^{k \times \cdots \times k}$ is the identity tensor, {\em i.e.}, $c_{i_1,...,i_m}=\delta(i_1=\cdots=i_m)$. Basically, Eq.~(\ref{eq:cp}) is a CANDECOMP/PARAFAC (CP) factorization \cite{kolda2009tensor} as shown in Figure~\ref{fig:cp}, with element-wise notation $w_{i_1,...,i_m}=\sum_{f=1}^k\prod_{p=1}^m a_{i_p,f}^{(p)}$. The number of model parameters is reduced to $k\sum_{p=1}^m (I_p+1)=k(m+d)$. It transforms Eq.~(\ref{eq:nest2}) into:
\begin{align}
\hat{y}=\sum_{i_1=1}^{I_1+1}\cdots\sum_{i_m=1}^{I_m+1} \left(\prod_{p=1}^m z_{i_p}^{(p)}\right) \left(\sum_{f=1}^k\prod_{p=1}^m a_{i_p,f}^{(p)}\right)\label{eq:mvm}
\end{align}

\begin{figure*}
\centering
    \begin{minipage}[l]{1.6\columnwidth}
      \centering
      \includegraphics[width=1\textwidth]{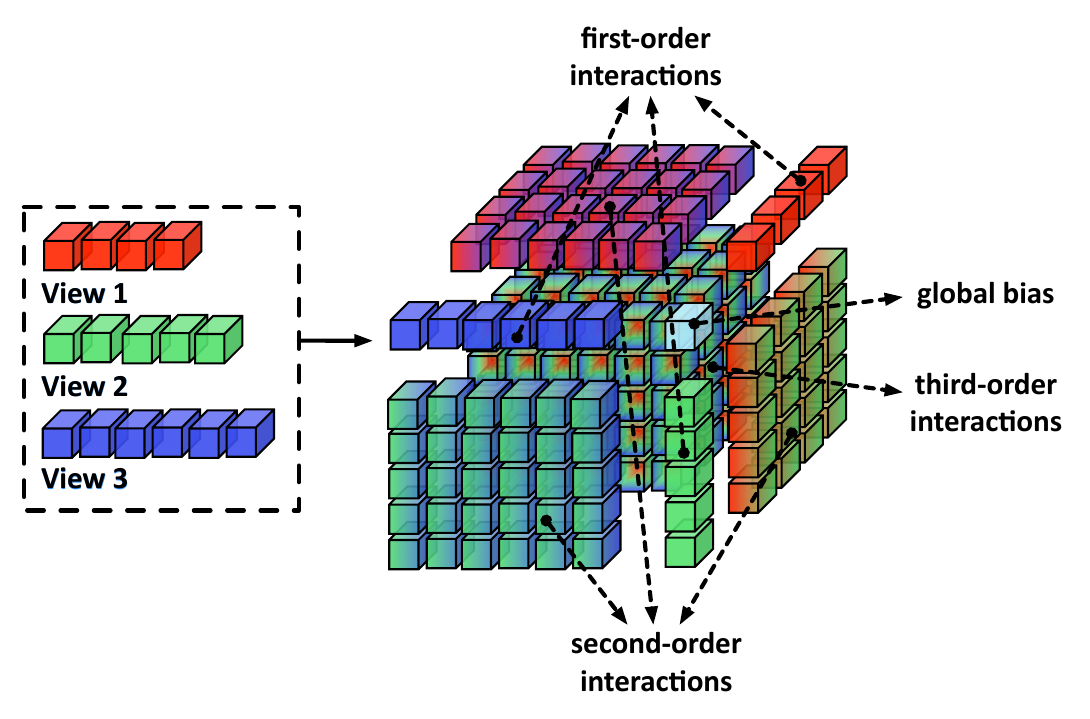}
    \end{minipage}
  \caption{Multi-view machines. The full-order feature interactions in multi-view data are modeled in a tensor and jointly factorized into a common latent subspace.}\label{fig:mvm}
\end{figure*}

We name this model in Eq.~(\ref{eq:mvm}) as multi-view machines (MVMs). As shown in Figure~\ref{fig:mvm}, the full-order interactions between multiple views are modeled in a tensor, and they are factorized collectively. The model parameters that have to be estimated are:
\begin{align}
\mathbf{A}^{(p)} \in \mathbb{R}^{(I_p+1) \times k},~p=1,...,m \label{eq:para}
\end{align}
where the $i_p$-th row ${\mathbf{a}_{i_p}^{(p)}}^T=(a_{i_p,1}^{(p)},...,a_{i_p,k}^{(p)})$ within $\mathbf{A}^{(p)}$ describes the $i_p$-th feature in the $p$-the view with $k$ factors. 

\begin{definition}[Bias Factor]
The bias factor is a collection of bias from each factor. In MVMs, the last row of $\mathbf{A}^{(p)}$, i.e., ${\mathbf{a}_{I_p+1}^{(p)}}^T$, represents the bias factor of the $p$-th view, and it is always associated with $z_{I_p+1}^{(p)}=1$ in Eq.~(\ref{eq:mvm}). 
\end{definition}

Hence,
\begin{align}
w_{I_1+1,...,I_m+1} = \sum_{f=1}^k\prod_{p=1}^m a_{I_p+1,f}^{(p)} \label{eq:w0}
\end{align}
is the {\em global bias}, denoted as $w_0$ hereinafter.

\subsection{Time Complexity}

Next, we show how to compute the decision function of MVMs efficiently. The straightforward time complexity of Eq.~(\ref{eq:mvm}) is $O(k\prod_{p=1}^m (I_p+1))$. However, we observe that there is no model parameter which directly depends on feature interactions, due to the joint factorization. Therefore, the time complexity can be largely reduced.

\begin{lemma}
The model equation of MVMs can be computed in $O(k(m+d))$.
\end{lemma}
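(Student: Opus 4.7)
The plan is to rewrite the model equation so that the multi-dimensional sum factorizes into a product of one-dimensional sums, which is the standard trick for CP-style models. Starting from Eq.~(\ref{eq:mvm}), I would first swap the outer $m$-fold summation with the sum over the latent factors $f$, obtaining
\begin{align*}
\hat{y}=\sum_{f=1}^{k}\sum_{i_1=1}^{I_1+1}\cdots\sum_{i_m=1}^{I_m+1}\prod_{v=1}^{m} z_{i_v}^{(v)}\,a_{i_v,f}^{(v)}.
\end{align*}
The key observation is that inside this expression the summand is a pure product of terms indexed by distinct variables $i_1,\dots,i_m$, so the $m$-fold sum decouples into a product of independent sums.

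Applying the distributive law view-by-view, I would then write
\begin{align*}
\hat{y}=\sum_{f=1}^{k}\prod_{v=1}^{m}\Bigl(\sum_{i_v=1}^{I_v+1} z_{i_v}^{(v)}\,a_{i_v,f}^{(v)}\Bigr).
\end{align*}
This is the form that makes the complexity claim transparent: for each factor index $f$ and each view $v$, the inner sum is a dot product between $\mathbf{z}^{(v)}$ and the $f$-th column of $\mathbf{A}^{(v)}$, which costs $O(I_v+1)$.

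To finish, I would add up the costs. For a fixed $f$, computing all $m$ inner dot products costs $O\!\bigl(\sum_{v=1}^{m}(I_v+1)\bigr)=O(m+d)$, and multiplying them together costs $O(m)$, which is absorbed. Summing over the $k$ factors yields the claimed $O(k(m+d))$ bound. The only substantive step is the algebraic identity above; everything else is bookkeeping. The main obstacle, if any, is being careful that the sum--product swap is really just repeated distributivity and does not rely on any structure beyond the factorized form $w_{i_1,\dots,i_m}=\sum_{f=1}^{k}\prod_{v=1}^{m}a_{i_v,f}^{(v)}$ imposed by Eq.~(\ref{eq:cp}), so I would state this identity explicitly before invoking it.
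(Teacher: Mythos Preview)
Your proposal is correct and mirrors the paper's own proof almost line for line: the paper also swaps the sum over $f$ with the $m$-fold index sum, then factorizes into $\sum_{f=1}^{k}\prod_{v=1}^{m}\bigl(\sum_{i_v} z_{i_v}^{(v)} a_{i_v,f}^{(v)}\bigr)$ and reads off the $O(k(m+d))$ cost. If anything, your explicit cost accounting is slightly more detailed than the paper's one-sentence conclusion.
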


\begin{proof}
The feature interactions in Eq.~(\ref{eq:mvm}) can be reformulated as:
\begin{align}
  & \sum_{i_1=1}^{I_1+1}\cdots\sum_{i_m=1}^{I_m+1} \left(\prod_{p=1}^m z_{i_p}^{(p)}\right) \left(\sum_{f=1}^k\prod_{p=1}^m a_{i_p,f}^{(p)}\right) \nonumber \\
= & \sum_{f=1}^k\sum_{i_1=1}^{I_1+1}\cdots\sum_{i_m=1}^{I_m+1} \left(\prod_{p=1}^m z_{i_p}^{(p)} a_{i_p,f}^{(p)}\right) \nonumber \\
= & \sum_{f=1}^k\left(\sum_{i_1=1}^{I_1+1} z_{i_1}^{(1)} a_{i_1,f}^{(1)}\right) \cdots \left(\sum_{i_m=1}^{I_m+1} z_{i_m}^{(m)} a_{i_m,f}^{(m)}\right) \label{eq:complexity}
\end{align}

This equation has only linear complexity in both $k$ and $I_p$. Thus, its time complexity is $O(k(m+d))$, which is in the same order of the number of model parameters.
\end{proof}

\subsection{Discussion}

The joint factorization of the global bias and the full-order interactions is important for MVMs. Thus, dependencies exist when interactions share the same feature. It benefits MVMs for parameter estimation under sparsity, since the latent factor $\mathbf{a}_{i_p}^{(p)}$ can be learned from any instances whose $x_{i_p}^{(p)}\neq0$, which allows the second-order interaction $w_{i_p,i_q}^{(p,q)}$ can be approximated from instances whose $x_{i_p}^{(p)}\neq0$ {\em or} $x_{i_q}^{(q)}\neq0$ rather than instances whose $x_{i_p}^{(p)}\neq0$ {\em and} $x_{i_q}^{(q)}\neq0$ as in nonlinear SVMs. Therefore, the interaction parameters in MVMs can be effectively learned without direct observations of such interactions in a training set of sparse data.

The main difference between FMs and MVMs is that the interaction parameters for different orders are completely independent in FMs, {\em e.g.}, the first-order interaction $w_{i_p}^{(p)}$ and the second-order interaction $\mathbf{v}_{i_p}^{(p)}$ in Eq.~(\ref{eq:fm2}). On the contrary, in MVMs, all orders of interactions share the same set of latent factors $\mathbf{a}_{i_p}^{(p)}$ in Eq.~(\ref{eq:mvm}). For example, the combination of $\mathbf{a}_{i_p}^{(p)}$ and the bias factors from other $m-1$ views, {\em i.e.}, $\mathbf{a}_{I_1+1}^{(1)},...,\mathbf{a}_{I_{p-1}+1}^{(p-1)},\mathbf{a}_{I_{p+1}+1}^{(p+1)},...,\mathbf{a}_{I_m+1}^{(m)}$, approximates the first-order interaction $w_{i_p}^{(p)}$. Similarly, we can obtain the second-order interaction $w_{i_p,i_q}^{(p,q)}$ by combining $\mathbf{a}_{i_p}^{(p)}$, $\mathbf{a}_{i_q}^{(q)}$ and other $m-2$ bias factors. Therefore, compared to MVMs, FMs are partially and independently factorized. Such difference is more significant for higher order FMs. As summarized in Table~\ref{tab:baseline}, assuming the same number of factors for different orders of interactions, the model complexity of an $m$-way FM is $O(kmd)$ which can be much larger than $O(k(m+d))$ of MVMs.

\begin{table*}
\centering
\caption{Summary of related work. Model complexity refers to both the number of parameters in the model and the time complexity to compute the decision function.}
\label{tab:baseline}
\begin{tabular}{lccc}
\toprule
Method & Model complexity & Feature interactions & Parameter factorization \\
\midrule
Support vector machines (SVMs) \cite{vapnik2000nature} & $O(d)$ & first-order & none \\
Support tensor machines (STMs) \cite{tao2005supervised} & $O(kd)$ & highest-order & factorized ($k=1$ \cite{cao2014tensor}) \\
Factorization machines (FMs) \cite{rendle2010factorization} & $O(kmd)$ & up to full-order & partially and independently factorized \\
Multi-view machines (MVMs) & $O(k(m+d))$ & full-order & fully and jointly factorized \\
\bottomrule
\end{tabular}
\end{table*}

\subsection{Extensions}

MVMs are flexible in the interactions of interests. That is to say, when there are too many views available for a learning task and interactions between some of them may obviously be physically meaningless, or sometimes the very high order interactions may not be intuitively interpretable, it is not desirable to include these potentially redundant interactions in the model. In such scenarios, one can (1) partition (overlapping) groups of views, (2) construct multiple MVMs on these view groups where the full-order interactions within each group are included, and (3) implement a coupled matrix/tensor factorization \cite{hong2013co}. This strategy excludes the inter-group feature interactions.

On the other hand, in scenarios where the view segmentation is not given, one may be aggressive to consider interactions between all features, which becomes the problem setting of the original FMs. To achieve this purpose, we can simply repeat the same feature set in multiple views. Overall, MVMs are applicable with either conservative or radical strategies. Although MVMs can be easily adapted to include/exclude interactions between any features, that is outside the scope of this paper; our focus is on investigating how to effectively explore the full-order feature interactions from a given set of views.

\section{Learning Multi-view Machines}\label{sec:learning}

To learn model parameters in MVMs, we consider the following regularization framework:
\begin{align}
\operatornamewithlimits{argmin}_{\Theta}\sum_{(\mathbf{x},y)\in\mathcal{D}}\mathcal{L}(\hat{y}(\mathbf{x}|\Theta),y)+\lambda\Omega(\Theta)
\end{align}
where $\Theta=\{\mathbf{A}^{(p)}|~p=1,...,m\}$ represents all model parameters, $\mathcal{L}(\cdot)$ is the loss function, $\Omega(\cdot)$ is the regularization term, and $\lambda$ is the trade-off between the empirical loss and the risk of overfitting.

Importantly, MVMs can be used to perform a variety of machine learning tasks, depending on the choices of the loss function. For example, to conduct regression, one can use the square error:
\begin{align}
\mathcal{L}^S(\hat{y}(\mathbf{x}|\Theta),y)=(\hat{y}(\mathbf{x}|\Theta)-y)^2 \label{eq:square}
\end{align}
and for classification problems, the logit loss can be used:
\begin{align}
\mathcal{L}^L(\hat{y}(\mathbf{x}|\Theta),y)=\log(1+\exp(-y\cdot\hat{y}(\mathbf{x}|\Theta))) \label{eq:logit}
\end{align}
or the hinge loss.
The regularization term is chosen based on our prior knowledge about the model parameters. Typically, we can apply L2-norm.

\subsection{Gradient Descent}

The model can be learned efficiently by alternating least square (ALS), stochastic gradient descent (SGD), L-BFGS, {\em etc.} From Eq.~(\ref{eq:complexity}), the gradient of the MVM model is:
\begin{align}
\frac{\partial\hat{y}(\mathbf{x}|\Theta)}{\partial\theta} = 
& z_{i_p}^{(p)} \left(\sum_{i_1=1}^{I_1+1} z_{i_1}^{(1)} a_{i_1,f}^{(1)}\right) \cdots \left(\sum_{i_{p-1}=1}^{I_{p-1}+1} z_{i_{p-1}}^{(p-1)} a_{i_{p-1},f}^{(p-1)}\right) \nonumber \\
& \left(\sum_{i_{p+1}=1}^{I_{p+1}+1} z_{i_{p+1}}^{(p+1)} a_{i_{p+1},f}^{(p+1)}\right) \cdots \left(\sum_{i_m=1}^{I_m+1} z_{i_m}^{(m)} a_{i_m,f}^{(m)}\right) \label{eq:gradient}
\end{align}
where $\theta=a_{i_p,f}^{(p)}$, and $z_{i_p}^{(p)}=1$ if $i_p=I_p+1$, otherwise $z_{i_p}^{(p)}=x_{i_p}^{(p)}$. It validates that MVMs possess the multilinearity property \cite{rendle2012factorization}, because the gradient along $\theta$ is independent of the value of $\theta$ itself.

Note that in Eq.~(\ref{eq:gradient}), the sum $\sum_{i_p=1}^{I_p+1} z_{i_p}^{(p)} a_{i_p,f}^{(p)}$ and their product can be precomputed for updating the $f$-th factor of all features. Hence, each gradient can be computed in constant time $O(1)$. In an iteration, including the precomputation time, all parameters can be updated in $O(k(m+d))$. It can be even reduced under sparsity, where most of the elements in $\mathbf{x}$ (or $\mathbf{z}$) are $0$ and thus, the sums have only to be computed over the non-zero elements, and only non-zero parameters need to be updated according to Eq.~(\ref{eq:gradient}).

It is straightforward to embed Eq.~(\ref{eq:gradient}) into the gradient of the loss functions {\em e.g.}, Eqs.~(\ref{eq:square})-(\ref{eq:logit}), for direct optimization, as follows:
\begin{align}
\frac{\partial\mathcal{L}^S(\hat{y}(\mathbf{x}|\Theta),y)}{\partial\theta} = 2(\hat{y}(\mathbf{x}|\Theta)-y) \cdot \frac{\partial\hat{y}(\mathbf{x}|\Theta)}{\partial\theta} \label{eq:grad_square}
\end{align}
\begin{align}
\frac{\partial\mathcal{L}^L(\hat{y}(\mathbf{x}|\Theta),y)}{\partial\theta} = \frac{-y}{1+\exp(y\cdot\hat{y}(\mathbf{x}|\Theta))} \cdot \frac{\partial\hat{y}(\mathbf{x}|\Theta)}{\partial\theta} \label{eq:grad_logit}
\end{align}

\subsection{Distributed Implementation}\label{sec:spark}

Web-scale applications in the real world always contain a huge number of entities represented in multiple views, {\em e.g.}, users, movies, ads, queries, with millions of instances, {\em e.g.}, ratings, impressions. In this section, we introduce a design for scalable learning and its implementation on top of GraphX \cite{graphx}, which is a component of Spark \cite{spark} for graphs and graph-parallel computation and provides high performance, scalability and fault-tolerance for the learning process.

The training data is represented as a graph that contains two types of vertices, {\em i.e.}, instance vertices and feature vertices. A directed edge from a feature vertex to an instance vertex exists if the feature is non-zero in the instance. The graph representation is efficient due to the inherent sparsity of the training data. The factor vector (or weight coefficient that is not factorized in some baselines) of a feature is represented as attributes of the corresponding feature vertex, the label information of an instance is represented as the attribute of the corresponding instance vertex, and the feature value is represented as the edge attribute. For distributed learning, the graph is partitioned and scheduled to different computing nodes for execution by the underlying distributed graph framework. In this manner, both data parallelism and model parallelism are achieved.

Each iteration in the gradient descent algorithm consists of two major steps, {\em i.e.}, feed-forward and back-propagation. In the feed-forward process, messages are sent from feature vertices to instance vertices following the edges which are arrays $\mathbf{b}=\mathbb{R}^k$ where $b_f=z_{i_p}^{(p)}*a_{i_p,f}^{(p)}$. An instance vertex receives all messages from its connected feature vertices and sums them in view-wise. The predicted value is then computed accordingly based on Eq.~(\ref{eq:complexity}). In the back-propagation process, messages are sent from instance vertices to feature vertices which are arrays $\mathbf{c}=\mathbb{R}^k$ where each element represents a gradient. A feature vertex averages the gradients received from its connected instance vertices and updates the factor vector accordingly based on Eqs.~(\ref{eq:grad_square})-(\ref{eq:grad_logit}).

\section{Experiments}\label{sec:exp}


\subsection{Experimental Setup}\label{sec:setup}

\noindent\textbf{Data collections.}
To evaluate the performance of multi-view prediction, we conduct extensive experiments on the {\ml} dataset for movie rating prediction (regression) and the {\bing} dataset for CTR prediction (classification), respectively.
\begin{itemize}
\item {\bf\ml~dataset}\footnote{\url{http://grouplens.org/datasets/movielens}}. A regression task for rating prediction is studied on the public dataset, {\ml}. Ratings are made on a 5-star scale, with half-star increments. Each rating in this dataset has three views, {\em i.e.}, users, movies and implicit user feedback. The user view consists of binary feature vectors for user ids, and thus for each rating there is only one non-zero feature in the user view, {\em i.e.}, the associated user id; the same for the movie view. The implicit feedback view is constructed following SVD++ \cite{koren2008factorization} to capture users' history information. Specifically, it consists of all movies the user has ever rated and it is normalized. Hence, this view makes use of implicit feedback information and indicates users' preference. For this problem, the performance is measured by root mean square error (RMSE).
\item {\bf\bing~dataset}\footnote{The dataset is used internally in the Bing Ads team for model experiments rather than training product models.}. A classification task for CTR prediction is investigated on a dataset collected from ad impression logs of Bing, comprising three views: queries, ad URLs and impression information. Each instance is labeled as 1 if the impression is clicked and -1 otherwise. The query view consists of unigrams of user query words\footnote{Stemming, lemmatization, removing stop-words, {\em etc.}, are handled beforehand.}. The ad URL view includes URLs corresponding to the shown ads. The impression view is composed of impression locations and matched types. All features are hashed as integer ids and represented by binary values. There are multiple non-zero features in the query view, only one non-zero feature in the ad URL view, and 2 non-zero features in the impression view. Area under the curve (AUC) is used as the evaluation metric.
\end{itemize}

See Table~\ref{tab:dataset} for more information about the statistics and parameters used for each dataset.

\begin{table}[t]
\centering
\caption{The statistics and parameters for each dataset. The number in braces indicates the dimensionality of the corresponding view.}
\label{tab:dataset}
\begin{tabular}{lcc}
\toprule
Dataset &{\ml} &{\bing} \\
\midrule
            &users (138,493)            &queries (958,426)      \\
Views       &movies (27,278)            &ad URLs (1,935,510)    \\
            &impl. (27,278)             &impressions (18)       \\
$n$         &20,000,263                 &28,622,281             \\
$\eta$      &0.1                        &0.1                    \\
$\lambda$   &0.01                       &0.01                   \\
$k$         &20                         &20                     \\
\#iterations&200                        &200                    \\
\bottomrule
\end{tabular}
\end{table}


\noindent\textbf{Compared models.}
In order to demonstrate the effectiveness of modeling feature interactions in multi-view data, we compare the following models:
\begin{itemize}
\item \textbf{Linear regression/logistic regression (LR)}. We implement linear regression for regression tasks, {\em e.g.}, rating prediction, and logistic regression for classification tasks, {\em e.g.}, CTR prediction. They are essentially representative linear models (including linear SVMs), but with different loss functions, {\em e.g.}, the square error and the logit loss, respectively. It is discussed in the form of SVMs in detail in Section~\ref{sec:svm}.
\item \textbf{Tensor factorization (TF)} is a generalization of matrix factorization to higher orders. We can directly use tensors to model the multi-view data and factorize the weight tensor \cite{cao2014tensor}. When the hinge loss is used, it can be solved in the framework of support tensor machines (STMs) \cite{tao2005supervised}. When there are two views with categorical features, TF is reduced to conventional matrix factorization without bias terms. It is introduced as the STM model in Section~\ref{sec:stm}.
\item \textbf{Factorization machine (FM)} explores pairwise interactions between all features without consideration of the view segmentation \cite{rendle2010factorization}. Its adaptation in the multi-view setting, denoted as {\bsfm}, considers feature interactions across views with the decision function in Eq.~(\ref{eq:fm2}). This FM variation is specifically reviewed in Section~\ref{sec:fm}. In addition to the popular 2-way FM model, we also implemented 3-way FMs to include higher order interactions, denoted as {\threewayfm}, where feature interactions with different orders are modeled but with separate sets of parameters\footnote{In experiments, the rank $k$ in {\threewayfm} is set to 10 for both the second-order and the third-order interactions, so that the number of model parameters stays the same as other factorization baselines.}. Moreover, we regularized the second-order and the third-order interactions sharing the same latent factors and assigned the global bias and the first-order interactions with independent weights that are not factorized, denoted as {\partialmvm}.
\item \textbf{Multi-view machine (MVM)} is our proposed model to explore the full-order interactions embedded within multi-view data, where feature interactions with different orders are jointly factorized and thereby sharing the same set of latent factors.
\end{itemize}


\noindent\textbf{Configuration.} All compared models are implemented on top of GraphX in Spark and trained with iterative forward and backward steps as in introduced in Section~\ref{sec:spark}. The stochastic gradient descent with adaptive (sub)gradient \cite{duchi2011adaptive} is used as the optimization method. The code has been made available at GitHub\footnote{\url{https://github.com/cloudml/zen/tree/mvm_opt/ml/src/main/scala/com/github/cloudml/zen/ml/recommendation}}.

For a fair comparison, the same parameter setting in Table~\ref{tab:dataset} is used for all compared models. A deterministic data sampling is applied on both datasets so that 80\% data is used for training and the other 20\% for test. All models are trained with the same hardware configuration, where 10 homogeneous computing nodes are connected via 40Gbps Infiniband network and each node has 16 2.40GHz Intel(R) Xeon(R) CPU E5-2665 cores and 128GB memory. There are 1 driver configured with 25GB memory and 10 workers configured with 100GB memory. The data is partitioned into 160 partitions based on node degree \cite{xie2014distributed} to balance the load in each core and reduce the communication among cores.

\subsection{Multi-view Prediction Accuracy}\label{sec:performance}

\begin{table}
\centering
\caption{Prediction accuracy. {\color{red}{$\downarrow$}} indicates the smaller the value the better the performance; {\color{green}{$\uparrow$}} indicates the larger the value the better the performance.}
\label{tab:exp}
\begin{tabular}{lcc}
\toprule
Dataset &{\ml} (RMSE) \color{red}{$\downarrow$} &{\bing} (AUC) \color{green}{$\uparrow$} \\
\midrule
\mvm        &0.8376      &0.7917            \\
\fm         &0.8681      &0.7872            \\
\bsfm       &0.8447      &0.7729            \\
\threewayfm &0.9060      &0.7201            \\
\partialmvm &0.9807      &0.6947            \\
\tf         &0.8572      &0.6645            \\
\lr         &1.0017      &0.7450            \\
\bottomrule
\end{tabular}
\end{table}

The experimental results are shown in Table~\ref{tab:exp}. On the {\ml} dataset, the smaller RMSE, the better an algorithm. We can observe that {\lr} is a simple baseline because as a conventional linear model, it neglects any interactions between features. However, such feature interactions can be critical in the sparse data, which explains much better performance achieved by {\fm} through including pairwise feature interactions. We further find that {\bsfm} is able to outperform {\fm} by excluding intra-view correlations. In our case of movie rating prediction on SVD++ data \cite{koren2008factorization}, intra-view correlations indicate interactions between movies the user has rated before which do not have direct influence on the user's preference of the current movie. However, inter-view correlations include interactions between the current movie and those movies rated by the user which are critical by matching the latent factor of the current movie and that of rated movies in the past. It validates the importance of introducing the view concept to learn an effective model in many problems.

The two variations of {\bsfm}, {\threewayfm} and {\partialmvm}, add the third-order feature interactions in addition to the 2-way {\bsfm}. The difference is that {\partialmvm} uses the same set of latent factors for the second-order and the third-order interactions, while {\threewayfm} introduces different parameters for interactions with different orders. It seems that the inclusion of higher order interactions fails to bring us any accuracy improvement, but {\tf} manages to perform well by solely relying on the highest-order interactions. It might imply that the consensus and complementary information between lower order and higher order interactions need to be better taken care of, which leads to our MVM model. Overall, we can observe from Table~\ref{tab:exp} that MVM achieves the best performance through joint factorization of feature interactions with different orders.

On the {\bing} dataset, {\fm} shows better performance than {\bsfm} implying intra-view correlations might be important for this problem. Consider the impression view comprising 2 non-zero features for each instance, {\em i.e.}, impression location and matched type. Feature interactions between impression locations and matched types are not included in {\bsfm}, whose variations and {\tf} are even defeated by the linear model, {\lr}.

\begin{figure*}[!ht]
\centering
  \subfigure[Training loss.]{\label{fig:ml_rmse}
    \begin{minipage}[l]{0.7\columnwidth}
      \centering
      \includegraphics[width=1\textwidth]{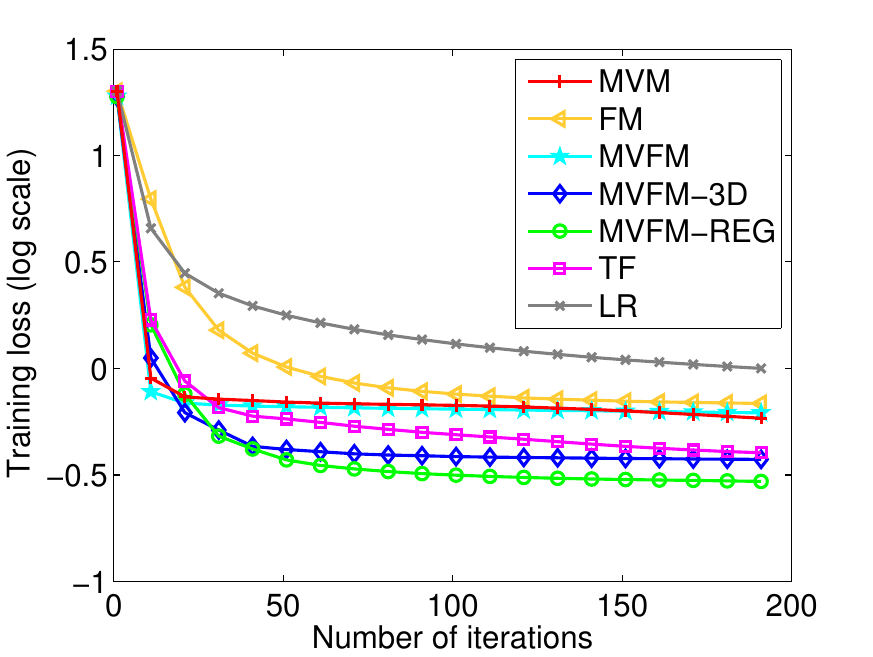}
    \end{minipage}
  }
  \subfigure[Time cost.]{\label{fig:ml_time}
    \begin{minipage}[l]{0.7\columnwidth}
      \centering
      \includegraphics[width=1\textwidth]{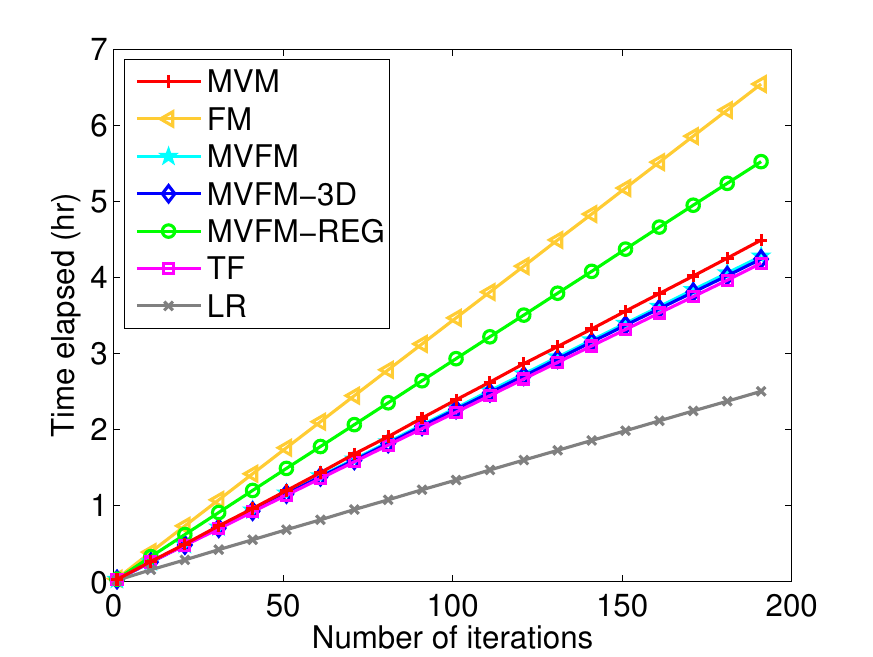}
    \end{minipage}
  }
\caption{The training procedure on the {\ml} dataset.}\label{fig:ml}
\end{figure*}

\begin{figure*}[!ht]
\centering
  \subfigure[Training loss.]{\label{fig:ads_auc}
    \begin{minipage}[l]{0.7\columnwidth}
      \centering
      \includegraphics[width=1\textwidth]{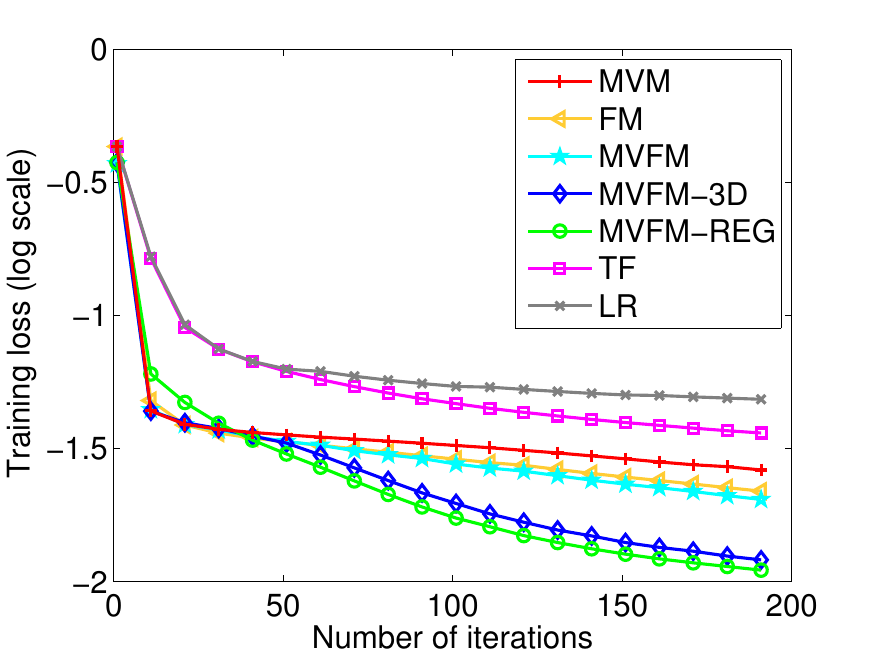}
    \end{minipage}
  }
  \subfigure[Time cost.]{\label{fig:ads_time}
    \begin{minipage}[l]{0.7\columnwidth}
      \centering
      \includegraphics[width=1\textwidth]{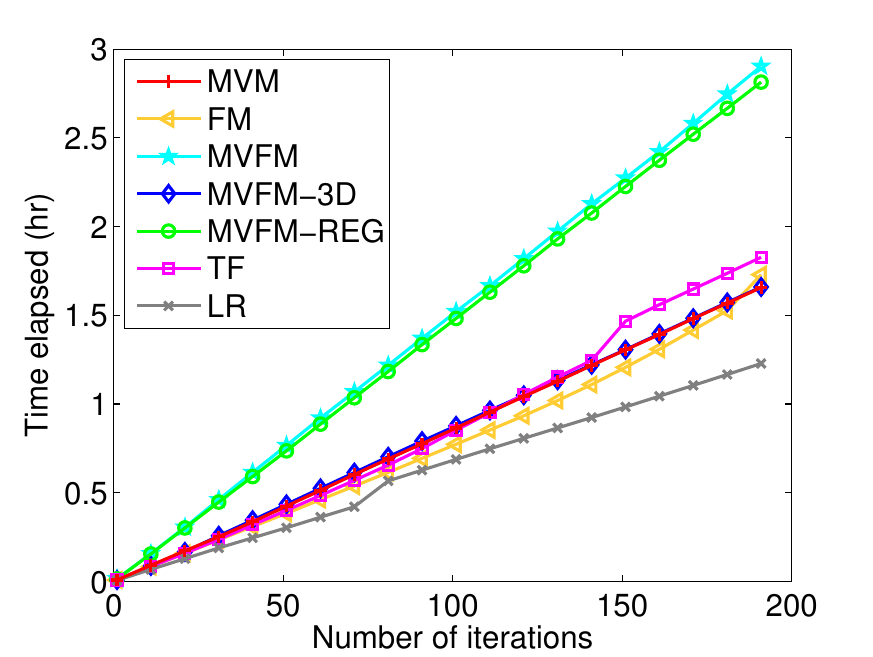}
    \end{minipage}
  }
\caption{The training procedure on the {\bing} dataset.}\label{fig:ads}
\end{figure*}

\subsection{Convergence Efficiency}\label{sec:convergence}

In this section, we show more details about the training procedure of compared models. Figure~\ref{fig:ml_rmse} illustrates the training loss on the {\ml} dataset where results are plotted in a log scale for better resolution on final convergence in the late stage. We observe that several compared models can fit themselves very well on the training data. For example, the final converged training RMSE of {\partialmvm} is as small as 0.5879; however, its test RMSE turns out to be 0.9807. It is clear that these models are easily prone to overfitting. On the contrary, our MVM model fits the training data with a moderate training RMSE = 0.7855, and achieves the best test RMSE = 0.8376, as shown in Table~\ref{tab:exp}. 

Figure~\ref{fig:ads_auc} shows similar observations on the {\bing} dataset where the overfitting problem is more significant. There is a possible reasoning about the capacity of MVMs to avoid overfitting. The joint factorization of the global bias, the first-order interactions, the second-order and higher order interactions plays a key role through the effect of bias factors $\mathbf{a}_{I_p+1}^{(p)}$. The bias factor of each view will be updated by all instances, since it is always associated with a non-zero feature. Considering lower order interactions, other factor vectors contribute to the decision function by combining with bias factors of other views, as shown in Eq.~(\ref{eq:mvm}). Therefore, bias factors are frequently retrieved and updated, making themselves relatively sensitive among model parameters. Our initial experiments found that the MVM model would suffer an unstable convergence process without the use of adaptive gradient \cite{duchi2011adaptive}. Fortunately, this problem can be greatly alleviated by adaptively choosing an appropriate learning rate, as illustrated by the monotonic convergence process shown in Figure~\ref{fig:ml_rmse} and Figure~\ref{fig:ads_auc}. With this problem solved, bias factors bring MVMs with the capacity to avoid overfitting by storing and providing the {\em global knowledge}, because each training instance will update them and each test instance will be predicted based on them. Such global knowledge is critical to model the bias information per view per factor and thus makes MVMs a robust model. In contrast, other compared methods ({\em e.g.}, {\lr}, {\fm}) use a single model parameter, {\em i.e.}, the global bias, for the purpose of the global knowledge, which is insufficient.

Figure~\ref{fig:ml_time} and Figure~\ref{fig:ads_time} compare the time cost of each model on the {\ml} dataset and the {\bing} dataset, respectively. We find that our MVM model has the best system performance among models that consider high order feature interactions without bringing too much system overhead than the linear model, {\lr}. The steep rise of {\lr} and {\tf} in Figure~\ref{fig:ads_time} appears because of fault occurrence during training and automatic recovery by Spark.

\subsection{Hyperparameter Sensitivity}\label{sec:hyper}

In all experiments, the parameter $\eta$ is heuristically set to 0.1 for MVMs and other baseline models, since the performance is insensitive to the initial learning rate by using the adaptive gradient \cite{duchi2011adaptive}. In this section, we study the influence of the other two key hyperparameters, $k$ and $\lambda$, in our MVM model. Due to the space limit, only results on {\ml} dataset are presented.

Experiments are conducted for different $k$ and the results are shown in Figure~\ref{fig:rank}. In contrast to findings in other related models based on latent factors \cite{yan2014coupled,rendle2012factorization} where accuracy can steadily get improved with larger $k$, we observe that both the converged training loss and test loss turn out to be better with the increasing of $k$ and reach a peak at $k=40$, after which the accuracy will obviously decrease. It is reasonable in a general sense, because larger $k$ renders the model with greater expressiveness which also leads to higher risk of overfitting. When the expressiveness of the model exceeds the information embedded in data, it is likely that the model will fit the training data very well yet fail on the test data. On the other hand, larger $k$ leads to more model parameters which make it harder to learn an effective model within limited iterations. In general, Figure~\ref{fig:rank} indicates that the performance of our MVM model in Table~\ref{tab:exp} can be further improved with $k=40$ at the cost of a linear increase in both runtime and memory.

Moreover, we investigate the influence of the regularization parameter $\lambda$ and present the results in Figure~\ref{fig:lambda}. We observe that our MVM model is insensitive to $\lambda$ in a relatively large range and performs well and steadily when $\lambda\le0.1$. It makes sense because large $\lambda$ will let the regularization term override the effect of the loss function and thus dominate the objective.

\begin{figure*}
\centering
  \subfigure[Influence of $k$.]{\label{fig:rank}
    \begin{minipage}[l]{0.6\columnwidth}
      \centering
      \includegraphics[width=1\textwidth]{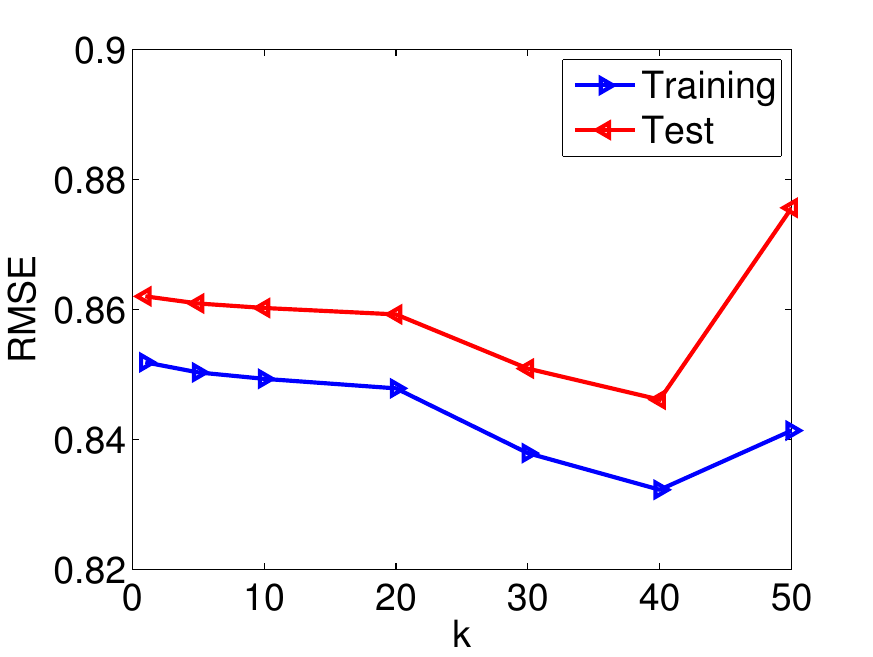}
    \end{minipage}
  }
  \subfigure[Influence of $\lambda$.]{\label{fig:lambda}
    \begin{minipage}[l]{0.6\columnwidth}
      \centering
      \includegraphics[width=1\textwidth]{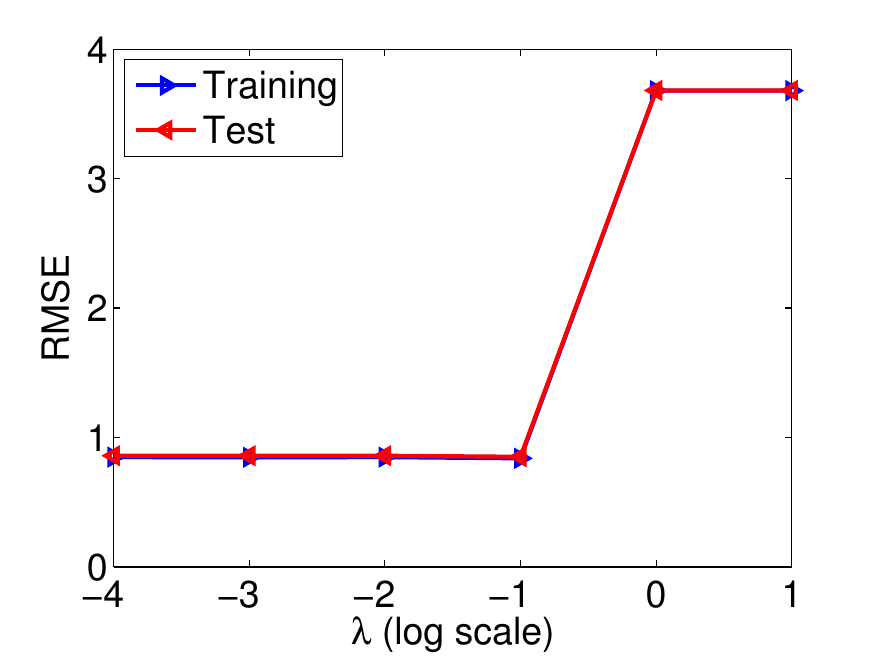}
    \end{minipage}
  }
  \subfigure[Speedup.]{\label{fig:speedup}
    \begin{minipage}[l]{0.6\columnwidth}
      \centering
      \includegraphics[width=1\textwidth]{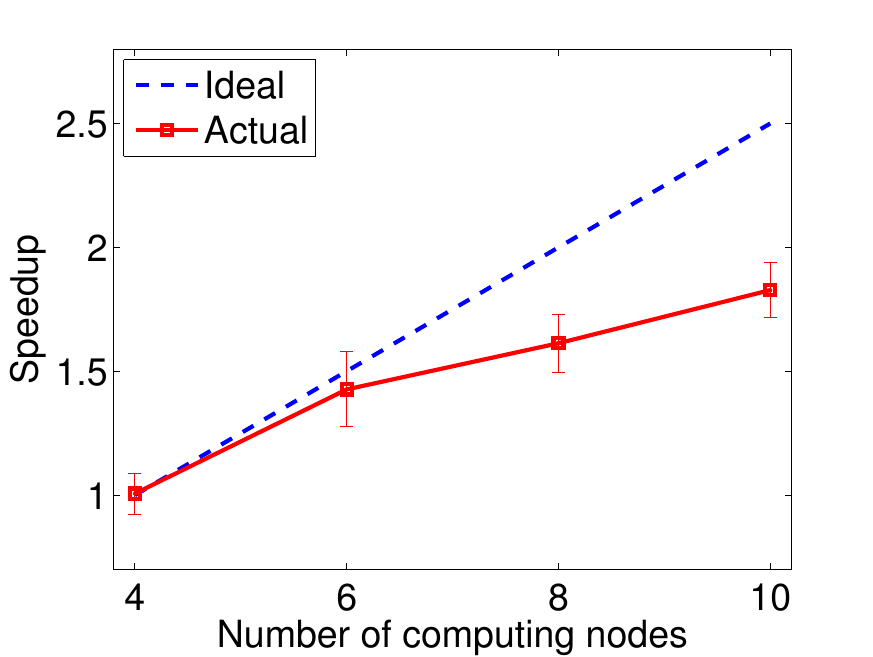}
    \end{minipage}
  }
\caption{Sensitivity analysis of hyperparameters and the speedup of the distributed learning framework.}\label{fig:hyperandspeedup}
\end{figure*}

\subsection{Scalability}\label{sec:speedup}

To investigate the scalability of our distributed learning framework introduced in Section~\ref{sec:spark}, we compute the speedup factor relative to the time cost with 4 nodes by varying the number of computing nodes from 4 to 10. The number of training data partitions is always configured to be the number of cores. Experiments are repeated 10 iterations, and the average speedup factors with standard deviations are reported in Figure~\ref{fig:speedup}. We can observe that the speedup appears to be close to linear and close to the ideal speedup factors. Therefore, our distributed implementation of the MVM model is very scalable for web-scale applications.

The gap between the real and ideal speedup may result from the increasing communication cost with the increasing number of computing nodes, since copies of instance and feature vertices would be distributed in more computing nodes which leads to larger state synchronization cost of each instance and feature vertice. We will consider using the parameter server to alleviate such problem.

\section{Conclusion and Future Work}\label{sec:end}

In this paper, we have proposed a multi-view machine (MVM) model and presented a stochastic gradient descent (SGD) learning method with a distributed implementation on Spark. In general, the model is particularly designed for data that is composed of features from multiple views, between which the full-order interactions are effectively explored. In contrast to other models that include only partial feature interactions or factorize different orders of interactions separately, MVMs jointly factorize the full-order feature interactions and thereby benefiting parameter estimation under sparsity and rendering the model with the capacity to avoid overfitting. Moreover, MVMs can be applied to a variety of supervised machine learning tasks, including classification and regression. Empirical studies on real-world web application datasets demonstrate the effectiveness of MVMs on modeling feature interactions in multi-view data, which outperform baseline models for multi-view prediction.

The MVM model can be further investigated in several directions for future work. For example, in addition to SGD, we are interested in implementation of other learning algorithms in a distributed environment to facilitate convergence efficiency, {\em e.g.}, alternating least square (ALS) and Markov Chain Monte Carlo (MCMC) for MVMs. It is also interesting to have our model applied to other multi-view prediction problems. Moreover, defining an evaluation metric for an effective view segmentation would be critical for the subsequent multi-view learning.

\section{Acknowledgements}
We would like to thank the anonymous reviewers for their comments. We also thank Dinggang Shen for his discussions, Chunhui Zhang from the Bing Ads team for providing the dataset and Bo Zhao for helping with the experiments.

\balance
\bibliographystyle{plain}
\bibliography{reference}

\end{document}